\documentclass[letterpaper, 10pt, conference]{ieeeconf}
\IEEEoverridecommandlockouts 
\usepackage{subcaption}
\usepackage[font=small]{caption}
\usepackage{subcaption}
\usepackage{tikz}
\usepackage{xcolor}
\usepackage{graphicx}
\usepackage{amsmath,amssymb}

\usepackage[linesnumbered,ruled,vlined]{algorithm2e}
\SetAlFnt{\small}
\SetAlgoNlRelativeSize{-1}
\SetKwInOut{KwIn}{Input}
\SetKwInOut{KwOut}{Output}
\SetKwFor{For}{for}{do}{end for}



\let\labelindent\relax
 
\usepackage{amsthm}
\usepackage{amsfonts}
\usepackage{amsmath}
\usepackage{amssymb}

\usepackage{float}
\usepackage{optidef}
\usepackage[normalem]{ulem}
\usepackage{lipsum}
\usepackage{mathtools}
\usepackage{cuted}
\usepackage{breqn}
\usepackage{graphicx}
\usepackage{psfrag}
\usepackage[bb=dsserif]{mathalpha}
\usepackage{cite}

\newcommand\redsout{\bgroup\markoverwith{\textcolor{red}{\rule[0.5ex]{2pt}{0.4pt}}}\ULon}

\definecolor{mygreen}{rgb}{0.10,0.50,0.10}

\newtheorem{proposition}{Proposition}
\usepackage{amsthm}
\makeatletter
\let\NAT@parse\undefined
\makeatother
\usepackage{url}
\usepackage[colorlinks,citecolor=blue,linkcolor=red]{hyperref}
\usepackage{cite}
\usepackage{cleveref}
\crefrangeformat{equation}{(#3#1#4)--(#5#2#6)}
\usepackage{import}
\usepackage{enumitem}

\usepackage{booktabs}
\title{\LARGE \bf Diffusion-Based Optimization for Accelerated Convergence of Redundant Dual-Arm Minimum Time Problems}

\author{
    Jushan Chen$^{*}$,
    Jonathan Fried, and
    Santiago Paternain
    \thanks{$^{*}$ Corresponding author. The authors are with the Department of Electrical, Computer, and Systems Engineering, Rensselaer Polytechnic Institute, 110 8th St, Troy, NY 12180, USA. {\tt\small \{chenj72, friedj2, paters\}@rpi.edu}}
}
\begin{document}

%% Shrink space around figures.
% \setlength{\textfloatsep}{0pt}
% \setlength{\textfloatsep}{20pt plus 2pt minus 4pt}
\setlength{\textfloatsep}{5pt plus 2pt minus 4pt}
\setlength{\dbltextfloatsep}{3pt}
% \setlength{\intextsep}{5pt}
% \setlength{\abovecaptionskip}{5pt}
% \setlength{\belowcaptionskip}{3pt}
% \setlength{\parskip}{0pt}
% %% Shrink space around equations
% \setlength{\abovedisplayskip}{3pt}
% \setlength{\belowdisplayskip}{3pt}
% \setlength\abovedisplayshortskip{3pt}
% \setlength\belowdisplayshortskip{3pt}
% %% Shrink space globally around enumerate and itemize
\setlist{nosep} 

\newcommand\green[1]       {{\color[rgb]{0.10,0.50,0.10}#1}}

\maketitle
\thispagestyle{empty}
\pagestyle{empty}

%%%%%%%%%%%%%%%%%%%%%%%%%%%%%%%%%%%%%%%%%%%%%%%%%%%%%%%%%%%%%%%%%%%%%%%%%%%%%%%%%%%%%%%
\begin{abstract}
% In this work, we present a method for minimizing the time required for a redundant dual-arm robot to follow a desired relative Cartesian path at constant path speed by optimizing its joint trajectories, subject to position, velocity, and acceleration limits. The problem is reformulated as a bilevel optimization whose lower level is a convex, closed-form subproblem that maximizes path speed for a fixed trajectory, while the upper level updates the trajectory using a single-chain kinematic formulation and the subgradient of the lower-level value. Numerical results demonstrate the effectiveness of the proposed approach.

We present a framework leveraging a novel variant of the model-based diffusion algorithm to minimize the time required for a redundant dual-arm robot configuration to follow a desired relative Cartesian path. Our prior work proposed a bi-level optimization approach for the dual-arm problem, where we derived the analytical solution to the lower-level convex sub-problem and solved the high-level nonconvex problem using a primal-dual approach. However, the gradient-based nature leads to a large computation overhead, and it prohibits directly imposing an $L_{\infty}$ Cartesian error constraint along the joint trajectory due to the sparsity of the gradient. In this work, we propose a diffusion-based framework that relies on probabilistic sampling to tackle the aforementioned challenges in the nonconvex high-level problem, leading to a 35x reduction in the runtime and 34\% less Cartesian error compared to our prior work.

\end{abstract}

%%%%%%%%%%%%%%%%%%%%%%%%%%%%%%%%%%%%%%%%%%%%%%%%%%%%%%%%%%%%%%%%%%%%%%%%%%%%%%%%%%%%%%%
\section{INTRODUCTION} \label{sec:introduction}
%\textcolor{purple}{\textbf{Just a copy paste. Still working on it.} Should I try to take more or less half a page to talk about robots so Randy can talk about diffusion?}

Redundancy resolution is a fundamental aspect of trajectory planning and control for robotic manipulators. In several industrial tasks, the end-effector's Cartesian path is described with fewer degrees of freedom than the manipulator possesses, either because the robot has more than six joints~\cite{bai2023, elias2024} or the task itself is lower-dimensional~\cite{conkur_1997, leger2016}.  For instance, in applications like spray coating and additive manufacturing~\cite{yundou2018, wen2020, coutinho2023}, rotation around the tool center point is often a free axis. The use of a single robot in geometric complex curves is generally insufficient since it may force the robot to operate in unfavorable joint positions or near singularities~\cite{siciliano1990}. Thus, manufacturers are increasingly adopting dual-arm systems, which increases the redundancy. 

The main advantage of kinematically redundant systems is the capability to optimize multiple metrics, such as velocity and acceleration norms ~\cite{jasour2009,jin2023}, tracking error~\cite{zhang2023}, or fault and singularity avoidance~\cite{li2020,kemeny2003}. On the other hand, the disadvantage of kinematically redundant systems is the increasing complexity of finding an optimal trajectory - a problem that has a finite number of solutions for a non-redundant robot, but has infinite solutions for a redundant system.  %\red{However, maintaining constant path velocity is often a critical requirement to ensure uniform material deposition and process quality~\cite{xiong2020, coutinho2023, Li2018, DING2023683, Tan2017, SERAJ201924}.} \blue{The transition here is not very smooth}\blue{Remove a few references here.}
%\green{A key requirement in additive manufacturing is to guarantee constant path velocity as a proxy for uniformity of material deposition and process quality.} \blue{I tried to provide an alternative, but I think it doesn't fit. The first paragaraph is about redundancy resolution and how it allows for optimizing different metrics. The second paragraph is about the metric we want to optimize. I think the green below fits better.}

A key industrial objective is minimizing the time to complete a given trajectory~\cite{zhang2013, wang2025}, subject to joints position, velocity, and acceleration limitations. Additionally, path speed is often used as a proxy for material deposition in additive manufacturing \cite{xiong2020, coutinho2023, Tan2017, SERAJ201924}, requiring a constant path speed to ensure a uniform deposition~\cite{he2023}. From a motion planning perspective~\cite{reiter2016}, traditional solutions utilize general inverse kinematics methods~\cite{daniel1969, wampler1987} to separate redundancy from trajectory optimization. This typically leads to non-convex problems due to non-linear forward kinematics, forcing reliance on generic solvers~(e.g., \cite{byrd2000trust, waltz2006interior, gill2019practical, Andersson2018}) that cannot exploit the specific structure of path traversal optimization. 

%\blue{Is there anything of the solutions above that does not apply to dual-arms? Even though the problem has been solved for single arms? My rationale is that the justification of dual-arm should come earlier.}

Current research in dual arm systems addresses these metrics through adaptive control~\cite{jiang2022} to optimize contact force, bimanual asymmetry inspired by human motion~\cite{lee2015} to split tasks in coarse motion and fine control categories for each arm, or motion planning using quadratic programming with inequality constraints~\cite{chaki2024} for Cartesian error reduction. Particularly, efforts to minimize execution time in dual-arm tasks include combining Bi-directional RRT with LSTM networks~\cite{YING2021}, MILP-based pick-and-place optimization~\cite{kurosu2017}, and time-optimal control for force manipulation~\cite{DILEVA2024}. However, these methods often focus on trajectory generation or task assignment rather than fully exploiting joint redundancy for precision and speed.

We address dual-arm motion planning by minimizing traversal time under strict accuracy constraints. \cite{bilevel_jonathan} describes a bi-level approach that separates the optimization of the system dynamics (related to path speed) from the kinematics (related to the geometric path and the redundancy resolution). Given a fixed set of joint trajectories, the resulting low-level problem is convex, and the optimal path speed can be efficiently computed in closed form. The remaining high-level problem aims to change the set of joint trajectories in order to improve the maximum path speed. This high-level problem is still non-convex, albeit benefiting from a reduced set of constraints. In previous works \cite{tang2019,sun2021,fried2024,bilevel_jonathan}, this structure is leveraged in gradient-based algorithms that obtain locally optimal solutions.%to obtain gradient-based solutions to the reduced problem, which obtains locally optimal solutions. 

%\blue{We should take a step back and explain this a bit more I think. From the perspective of a reader, what is this bi-level approach? Explain instead that given a joint path, the the path velocity can be computed in closed form. Then, the problem that remains to solve is still non-convex but with less constraints.} \textcolor{purple}{I'm unsure about how much technical detail should go in the introduction, though.} 

% \textcolor{purple}{Diffusion goes here?}
In the years since 2020, the robot learning research community has witnessed a surge in the application of foundation models. In particular, variants of Diffusion Models \cite{Ho2020DenoisingDP,DDIM} have been proposed and validated on various task domains such as manipulation \cite{diffusion_policy, Wu_2025_ICCV}, trajectory planning \cite{safe_diffuser,pan2024modelbaseddiffusion,zhang2025constraineddiffusers,kurtz2024equalityconstraineddiffusion, jiang2025streaming, Chen2025PADTROPD}, and control tasks \cite{kurtz2026generative, zhou2026diffusion}. The backbone of diffusion lies in probabilistic sampling, and it has shown robust performance to mitigate sub-optimality issues in nonconvex optimization. 
% \red{Randy, add a comment explaining why this is promising for our non-convex optimization part. And smooth out the transition to the next paragraph.}

Inspired by diffusion models, in this work, we improve the solutions of the nonconvex high-level problem in the dual-arm motion planning problem by utilizing the low-level subproblem as a cost function for the remaining non-linear high-level problem, which is then solved via a model-based diffusion optimization approach. This solution is still locally optimal, but it obtains an order of magnitude faster and with less Cartesian error than previous methods. Furthermore, this approach simplifies the implementation of non-differentiable constraints, a problem often faced in gradient-based approaches \cite{bilevel_jonathan,sun2021}.

This work is organized as follows. In Section \ref{sec:problem_formulation}, we formulate the redundant dual-arm minimum time constant path speed problem, showcasing the bi-level structure. Its lower-level problem has been shown to have a closed-form solution, yet the remaining high-level problem is still nonlinear. {Section \ref{sec:background} introduces the necessary background on model-base diffusion and Section \ref{sec:methodology} presents the proposed algorithm. Other than concluding remarks (Section \ref{sec:conclusion}), the paper finishes with numerical experiments in Section \ref{sec:results}, where we compare the proposed algorithm to the bi-level approach in our previous work \cite{bilevel_jonathan} in an experiment inspired by a cold spraying application. Comparisons with general off-the-shelf nonlinear solvers (CasADi, ipopt) have been a subject of our previous paper and are not included here for brevity. } %\textcolor{purple}{Section \ref{} exploits efficiency of the low-level solution to do something about the high-level problem?}. \textcolor{purple}{Diffusion benefits when compared to the previous paper goes here?} \blue{I think it should be before the organization. Before organization of the paper we talk about our contribution. } This is supported by the numerical results in Section \ref{sec:results}. In particular, we compare the current approach with the  used in our previous paper \cite{bilevel_jonathan} in an experiment inspired by a cold spraying application, for a given number of initial conditions. \textcolor{purple}{Comparisons with general off-the-shelf nonlinear solvers (CasAdi, ipopt) have been a subject of our previous paper and are not included here for brevity.} This is followed by concluding remarks in Section \ref{sec:conclusion}.

\section{PROBLEM FORMULATION} \label{sec:problem_formulation}
Consider the problem of optimizing the joint motion of two redundant robot manipulators to trace a desired Cartesian path. To be formal, let $s_i$, where $s_0=0$ and $s_N=1$ with $i=0,\ldots,N$ denote the path length and %{\color{red}{$\chi^d: [0,1]\to \mathbb{R}^{m \times N+1}$}} 
$(\chi^d_i)_{\mathcal{F}} \in \mathbb{R}^m$ with $i=0,\ldots N$ the Cartesian path to trace in some frame of reference $\mathcal{F}$. 

Let $n_A$ and $n_B$ be the number of joints of each robot. The robotic system (shown in Figure \ref{fig:transforms}) is redundant with respect to the task, so $n = n_A + n_B > m$. 
Denote by $\mathcal{F}_{bA}$, $\mathcal{F}_{tA}$ the Manipulator A base frame and TCP frame, respectively, and $\mathcal{F}_{bB}$, $\mathcal{F}_{tB}$ those of Manipulator B (see Figure \ref{fig:transforms}). 
% \red{The system and important coordinate frames are shown in Figure \ref{fig:transforms}, where $\mathcal{F}_{bA}$, $\mathcal{F}_{tA}$ are the Manipulator A base frame and TCP frame, respectively, and $\mathcal{F}_{bB}$, $\mathcal{F}_{tB}$ are the Manipulator B base frame and TCP.} 
% \begin{figure}
%    \centering
%    \includegraphics[width=0.9\linewidth]{figures/robot.drawio_6 _2.png}
%    \caption{Setup example consisting of two arms with 3 degrees of freedom.}
%    \label{fig:frameexample}
% \end{figure}

Without loss of generality, it is possible to describe the relative pose between the two end effectors as one single kinematic chain $\mathcal{F}_{tB} \rightarrow \mathcal{F}_{bB} \rightarrow \mathcal{F}_{bA} \rightarrow \mathcal{F}_{tA}$ \cite{roberts2015}. The single-chain forward kinematics $k:\mathbb{R}^{n}\to\mathbb{R}^m$ maps joint space to Cartesian space, i.e.,
\begin{equation}
    \label{eq:fwd_kin_i} 
    (\chi)_{\mathcal{F}_{tB}}= k(q),
\end{equation}
where $q^T= [q_B^T,q_A^T]^T$ is the vector containing the joint pose of both robots. The manipulator's differential kinematics, which maps joint velocity to Cartesian velocity, is given by
% \red{Let $\dot{\chi}_\ell \in \mathbb{R}^m$ be the TCP Cartesian velocity with respect to the base $\ell$, $\dot{q}_\ell \in \mathbb{R}^{n_\ell}$ the joint velocity vector, and $J_\ell(q_\ell) \in \mathbb{R}^{m\times n_{\ell}}$ the manipulator's Jacobian. Then, the manipulator's differential kinematics\red{, which map joint velocity to Cartesian velocity,} are given by}
%
\begin{equation}
    \label{eq:dif_kin_i}
    (\dot{\chi})_{\mathcal{F}_{tB}} = J(q)\dot{q}.
\end{equation}

Representing the pose in $\mathcal{F}_{tB}$ provides benefits twofold. (i) The desired Cartesian path $(\chi_d)_{\mathcal{F}_{tB}}$ is agnostic to the robot's joint pose; and (ii) the relative Jacobian, $J$, defined in \eqref{eq:dif_kin_i} can be calculated from the Jacobian of each manipulator, generally provided by robot manufacturers. For brevity, we forgo writing the reference frame $(\;\;)_{\mathcal{F}_{tB}}$ for the remainder of the text.

The dual-arm robot system is redundant, hence multiple trajectories in joint space result in the same relative Cartesian path. Among these, we are interested in finding one with the minimum traversal time and constant path velocity. Let $\dot{s}$ denote the path velocity; then the path completion time is given by  
%
 %Let $t_i$ with $i=0,\ldots N-1$ denote the time interval needed to go from path point $s_i$ to path point $s_{i+1}$. Then, we can write $t_f$ and its first order approximation as 
\begin{equation}
\label{eq:objective}
t_f = \frac{1}{\dot{s}}.
\end{equation}
%
%
%Note that the objective does not depend on $\dot{s}_N$, since the path is completed at point $N$. Thus, the speed at that point does not contribute to the total time. 

Each joint of the combined manipulator $j=1,\ldots,n$ operates under position, velocity, and acceleration constraints, given by 
\begin{equation}
    \underline{q}_{j} \leq q_{ij} \leq \overline{q}_{j},\;\dot{\underline{q}}_{j} \leq \dot{q}_{ij} \leq \dot{\overline{q}}_{j}\;\mbox{and}\;    \ddot{\underline{q}}_{j} \leq \ddot{q}_{ij} \leq \ddot{\overline{q}}_{j},
    \label{eq:joint_restrictions}
\end{equation}
where $q_{ij}$ is the manipulator's joint $j$-th position at path point $s_i$, $\forall \, j =\{1,\ldots,n\}$ and $\forall \,i =\{0,\ldots,N\}$, and $\dot{\underline{q}}_{j}, \ddot{\underline{q}}_{j} < 0$, $\dot{\overline{q}}_{j}, \ddot{\overline{q}}_{j} > 0$. This assumption implies that revolute joints can rotate clockwise or counterclockwise, or move forward and backward for prismatic joints.  We approximate the joints' trajectory $q_{ij}$ linearly with respect to vectors of parameters $\theta_{j} \in \mathbb{R}^d$ for all $j=1,\ldots{n}$ so that the position, velocity, and acceleration of each joint can be described as
\begin{align}
    q_{ij} = p(s_i)\theta_{j}, \label{eq:joint_reconstruction}\\
    \dot{q}_{ij} = q_{ij}^{\prime}(s_i)\dot{s}_i = p^{\prime}(s_i)\theta_{j} \dot{s}_i. \\
    \ddot{q}_{j} = q_{ij}^{\prime\prime}\dot{s}_i^2 =  p^{\prime\prime}(s_i)\theta_{j} \dot{s}^2.
    \label{eq:paramterization}
\end{align}
where $p(s_i) \in \mathbb{R}^{1\times d}$ is a twice differentiable vector basis in $s$; in this work, a monomial basis for simplicity. We use the fact that $\ddot{s}_i = 0$  since we are interested in trajectories with constant path velocity to simplify the second derivative. It is worth pointing out that linear parameterization assumptions are common in the literature (see e.g., \cite{koubiaspline,zhu2022,embry2018})% and that each joint trajectory could be parameterized by a different basis, however, we use the same one for simplicity. \blue{We can change the last comment now that we define a specific function.} %%
\begin{figure}
    \centering
    \includegraphics[width=9cm]{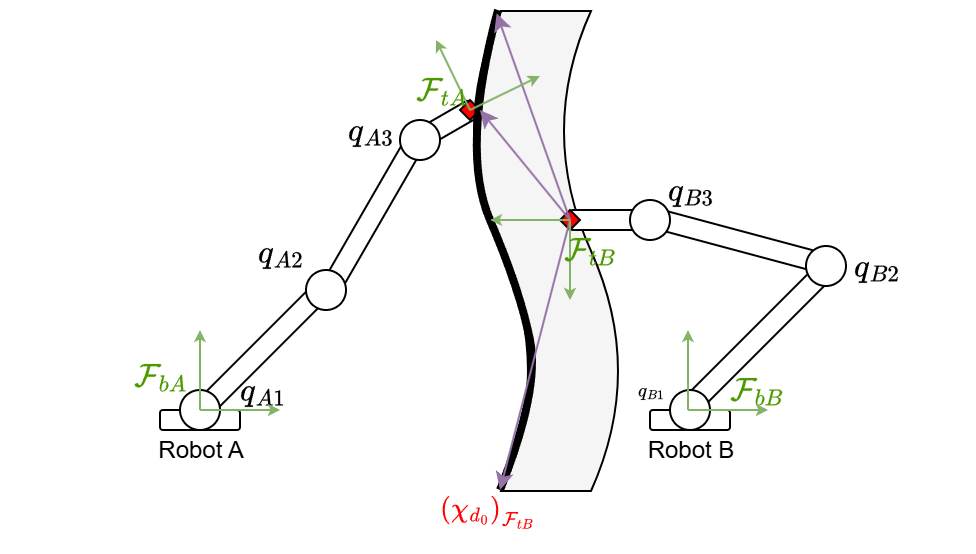}
    \caption{Setup example consisting of two arms with 6 degrees of freedom. Transformations between the frames of the dual arm robotic system.}
    \label{fig:transforms}
\end{figure}
Furthermore, we define the following notation $p_i\!\!=\!\!p(\!s_i\!), p^\prime_i\!\!=\!\!p^\prime\!(\!s_i\!), p^{\prime\prime}_i\! =\!p^{\prime\prime}\!(\!s_i\!)$, which we will use for the remainder of this work. 
It is important to note that since $p^\prime$ and $p^{\prime\prime}$ are not constant, the joint velocities and accelerations are not constant despite $\dot{s}$ being so. Thus, the joint limits must still be considered.

%Furthermore, we write compactly manipulators $\ell$'s joints positions at path point $i$ as $q_{\ell i}\!=\!\left(\!I_{n_\ell}\otimes p_i\!\right)\theta_\ell$, where $\otimes$ denotes the Kronecker product, $I_{n_\ell}$ is a $n_\ell\times n_\ell$ identity matrix and $\theta_\ell = \left[\theta_{\ell 1}^T,\ldots,\theta_{\ell n_\ell}^T\right]^T$. %In what follows, we omit the parenthesis since, given the dimensions of the matrices, this is the only possible order of the operations. 

Finally, define the $L_\infty$ error between the desired and the manipulator's relative Cartesian path as
\begin{equation}
    \label{eq:error}
    E(\theta) = \left\|\chi^d(\theta)- \chi(\theta)\right\|_\infty
    \end{equation} %((k_A(\theta_A))_{\mathcal{F}}-(k_B(\theta_B))_{\mathcal{F}}) 
where we have written the Cartesian path as a function of the joint parameters with a slight abuse of notation. %Note that in many industrial applications (\cite{he2023},\cite{coutinho2023}) this relative path $\chi_d$ is defined by a part attached to the TCP of one of the robots. In this case, describing this path in a general frame $\mathcal{F}$ may be non-trivial and depend on the robot pose.
%, and $\theta_A,\theta_B$ are the vectors of parameters corresponding to robots A and B respectively. 

 With this, we are now able to formalize the minimum time joint optimization problem:

\begin{mini}|s|
  {\stackrel{\dot{s}\geq 0,}{\stackrel{\theta \in \mathbb{R}^{nd}}{{}}}
  }
  { \frac{1}{\dot{s}}}{\label{opt:generaldiscrete}}{{t_f}^\star=}
%  \addConstraint{\sum_{i=0}^N}{ \frac{1}{2}\left\|\chi^d_i-k(I_n\otimes p_i\theta) \right\|_2^2\leq \epsilon}{}
   \addConstraint{}{}{E(\theta)\leq \epsilon}
 \addConstraint{}{}{\underline{q}_{j} \leq p_i\theta_j\leq \overline{q}_{j}}  
  \addConstraint{}{}{\dot{\underline{q}}_{j} \leq p_i^\prime\theta_{j}\dot{s}}\leq \dot{\overline{q}}_{j}
  \addConstraint{}{}{\ddot{\underline{q}}_{j} \leq p_i^{\prime\prime}\theta_{j} {\dot{s}^2} \leq \ddot{\overline{q}}_{j}}
 % \addConstraint{}{}{\dot{s}_{i+1} = \dot{s}_i + \ddot{s}_i t_i}
 % \addConstraint{}{}{s_{i+1} = s_i + \dot{s}_i t_i + \ddot{s}_i \frac{t_i^2}{2}}
 \end{mini}
where $E(\theta)$ is the error on the Cartesian as defined in \eqref{eq:error} and $\epsilon$ a tolerance on it. In the previous expressions, we have omitted that the constraints need to hold for all $i=\left\{0,\ldots, N\right\}$, $j\! =\! \left\{1,\ldots,n\right\}$.

The kinematic and dynamic constraints involving the products of different decision variables and the nonlinear relationship between joints and Cartesian paths (which makes the function $E(\theta)$ nonconvex) make problem \eqref{opt:generaldiscrete} a non-convex optimization problem. However, as established in \cite{bilevel_jonathan}, it is possible to reduce the dimensionality of this problem by first solving its dynamic constraints. %\red{Indeed, for a fixed $\theta$, the optimization problem becomes convex with respect to $\dot{s} \geq 0$.} \blue{We don't really care about convexity here, right?} \textcolor{purple}{Since it has a closed form solution, we don't really.}
Let \begin{dmath}\label{eq_solution_inner_cte}
    V(\theta) \!\!=\!\!\!\!\!\! \max_{\substack{
    j \in\{1,\dots,n\} \\ i\in\{0,\ldots,N\}}} \!\!\!\left\{\!\!\left(\!\!\max\left\{\frac{p'_i\theta_{j}}{\dot{\overline{q}}_{j}},\frac{p^{\prime}_i\theta_{j}}{\dot{\underline{q}}_{ j}}\right\}\!\!\right)^2\!\!\!\!\!,  \max\!\!\left\{\frac{p''_i\theta_{j}}{\ddot{\overline{q}}_{j}},\frac{p^{\prime\prime}_i\theta_{j}}{\ddot{\underline{q}}_{j}}\right\}\!\!\!\right\}.
\end{dmath}
Then, problem \eqref{opt:generaldiscrete} can be rewritten as

\begin{mini}|s|
  {\theta \in \mathbb{R}^{n \times d}}{ V(\theta)}{\label{opt:outer}}{({t_f}^\star)^2=}
  \addConstraint{E(\theta)}{ \leq \epsilon}{}
  \addConstraint{\underline{q}_{j} }{\leq p_i\theta_j \leq \overline{q}_{j}}{}  \end{mini}
where in the previous expressions the constraints need to hold for $i=0,\ldots, N$, $j=1,\ldots,n$. The challenge in solving \eqref{opt:generaldiscrete} is therefore reduced to solving what we term the high-problem~\eqref{opt:outer}. In contrast, the lower-level problem refers to the solution of $\dot{s}_{\max}$ in terms of $\theta$ and the evaluation of $V(\theta)$, see \cite{bilevel_jonathan}. In particular, $V(\theta)$ is the square of the optimal traverse time for a given path defined by $\theta$, as seen in \cite[Theorem 1]{bilevel_jonathan}. Thus, the maximum path velocity (see \eqref{eq:objective}) can be computed as 
\begin{equation}
    \dot{s}_{\max}(\theta) = \frac{1}{\sqrt{V\theta)}}.
\end{equation}

 %\blue{I think this version is more direct and it makes more sense for us.}
%\red{Note in \eqref{opt:generaldiscrete} that, for a fixed $\theta$, the objective is convex in $\dot{s}^2$ and constraints are linear. The key idea in this solution is that to find the maximum path speed a manipulator can perform, for a given and fixed $\theta$, it is necessary to identify the point $i$ and joint $j$ that presents the tightest limit for $\dot{s}$, therefore limiting every point in the path and for every joint to its path speed. This can be done by checking which of the $2n (2N+1)$ joint velocity and acceleration constraints is active. Furthermore, $V(\theta)$ is a convex function and $(\dot{s}^2(\theta))^\star=1/V(\theta)$.}

%\blue{I think it's hard for a reviewer to accept ``note that $V(\theta)$ is the square...'' We need to expand the explanation and connect more with our previous work. Which Theorem of the paper guarantees that (11) holds?} \textcolor{purple}{We actually don't prove that the two problems are equivalent in the dual-arm paper, that's only in the TRO we submitted (or the arxiv) \eqref{eq_solution_inner_cte} is proved in Theorem 1. Should I go back to definiting the lower and higher level problems first, and then giving the solution?}

The high-level problem \eqref{opt:outer}, albeit with fewer constraints than the original problem \eqref{opt:generaldiscrete}, still presents a non-convex feasible set, particularly related to nonlinear forward kinematics when computing the Cartesian error constraint - the objective $V(\theta)$ itself is convex with respect to $\theta$. In \cite{bilevel_jonathan}, a primal-dual method was implemented to solve the leftover problem, where the solutions obtained were local and dependent on the initial joint configurations. In \cite{tang2019}, an off-the-shelf NLP solver (SNOPT) is used to minimize a similar, kinematic high-level subproblem. \cite{sun2021} uses an adaptive backtracking line search routine, albeit their method establishes the kinematics as the low-level problem and the dynamics as the high-level problem. {These gradient-based solutions are limited when solving non-convex, non-differentiable problems. In \cite{bilevel_jonathan}, we approximate the infinity norm of the Cartesian error by a two-norm approximation. Although this choice makes the problem more tractable, we are unable to make guarantees about the correlation between the feasibility of the two different norms.} %\blue{I think we need to make explicit that all these solutions are gradient-based and that they struggle with solving non-convex non differentiable problems. We need to also bring the fact that we are looking at the norm-infinity of the error. This should help us connect better with the MBD framework.}

% \textcolor{purple}{In the next section, we leverage the closed-form solution of the low-level problem (which contains the problem dynamics and can be efficiently solved) as a reward function to iterate the high-level problem (kinematics) using a model-based diffusion method. This solution allow us to obtain results comparable to \cite{bilevel_jonathan}, in a fraction of the time with less Cartesian error. \textbf{Randy, adapt this part depending on the results that you get.}}

In the next section, we leverage the closed-form solution of the low-level problem, which contains the problem dynamics and can be efficiently solved, as a reward function to iteratively solve the high-level problem (kinematics) using a model-based diffusion method, constrained directly with the infinity norm of the Cartesian error. This framework allows us to obtain results comparable to \cite{bilevel_jonathan}, with \textbf{35x reduction in the runtime} and \textbf{34\% less Cartesian error} on average.

\section{Background} \label{sec:background}

Unlike most diffusion-based planners, which learn a denoising model from large demonstration datasets, Model-Based Diffusion (MBD) \cite{pan2024modelbaseddiffusion} performs trajectory optimization directly via probabilistic sampling based on known information about the underlying problem without any training data. For example, we aim to minimize the objective function $\mathcal{R}(\cdot)$. For a given objective function $\mathcal{R}(\cdot)$, we transform it into a probability distribution $P(\cdot)$ following a Boltzmann distribution (energy-like function), given by $P_{\mathcal{R}}(\cdot) \propto e^{\frac{-\mathcal{R}(\cdot)}{\tau}}$, where $\tau$ is a temperature constant. In summary, \textit{model-based diffusion} means that one can evaluate $P_{\mathcal{R}}(\cdot)$ for arbitrary samples in $(\cdot)$. 

While other sampling-based optimization methods already exist, such as Markov Chain Monte Carlo (MCMC) sampling and Cross-Entropy Method (CEM) \cite{Botev2013CrossEntropyOptimization}, the novelty in MBD is that it does not assume the $\mathcal{R}(\cdot)$ or $g(\cdot)$ to be smooth or convex. In MBD, an iterative reverse diffusion procedure is derived that gradually denoises the samples of Gaussian noise into the samples from the target distribution. This is structurally similar to the well-known Denoising Diffusion Probabilistic Models (DDPM) \cite{Ho2020DenoisingDP}, where the score function aggregates samples toward a region of high probability. In contrast, the score function in MBD is not learned from data; instead, it is analytically estimated using Monte Carlo sampling and parallel evaluation of the underlying objective function. To adapt MBD to our problem in~\eqref{opt:generaldiscrete}, the decision variable is $\theta$, which parametrizes the \textit{entire joint trajectory}, for all $i = \{0,\dots,N\}$ and $j = \{1,\dots, n\}$.
% \blue{I'm not sure that this makes sense for us. First we should adapt the language of the paper to ours or clarify the connection better. As a reviewer/reader I would be very confused about this control input. We haven't talked about any control input so far.  Also, our optimization is static to some extent. The parameters $\theta$ parameterize the whole trajectory. We are not solving for each individual $q$ along the path as far as I understood so the method can be interpreted as a diffusion-based analogue of single-shooting trajectory optimization}. 
We denote the reverse diffusion time step as \(t\), where $t = \{\tilde{N}, \tilde{N}-1, \dots,1\}$, with initialization following \(\theta^{\tilde{N}} \sim \mathcal{N}(0,I)^{nd}\). While MBD proposes a trajectory optimization method for dynamical systems, our problem domain is a kinematic trajectory optimization problem. Thus, the reverse update adapted from \cite{pan2024modelbaseddiffusion} is given as
\begin{equation}
    \theta^{t-1}
    =
    \frac{1}{\sqrt{\alpha_t}}
    \left(
        \theta^t + (1-\bar{\alpha}_t)\nabla_{\theta^t}\log P_t(\theta^t)
    \right),
    \label{eqn:MBD_reverse}
\end{equation}
where \(\bar{\alpha}_t = \prod_{k=1}^{t}\alpha_k\) and \(\alpha_t = 1-\beta_t\). In addition, \(\beta_t\) is chosen such that it linearly and uniformly increases from $\beta_0 \ll 1 $ to $\beta_N \ll 1$. Furthermore, \(\{P_t(\cdot)\}_{t=0}^N\) denotes the intermediate probability distributions obtained by gradually corrupting some target distribution \(P_0\) with Gaussian noise. 

At each reverse step \(t\), MBD estimates the score function \(\nabla_{\theta^t}\log P_t(\theta^t)\) by first drawing a batch of \(N_s\) candidate samples of $\theta$ around the current iterate using a Gaussian probability distribution $\mathcal{N}(\cdot)$:
\begin{equation}
    \theta^t_{N_s} \sim \mathcal{N}\!\left(
        \frac{\theta^t}{\sqrt{\bar{\alpha}_t}},
        \left(\frac{1}{\bar{\alpha}_t}-1\right)I
    \right).
    \label{eqn:batch_sampling_mbd}
\end{equation}
Then, each sampled polynomial coefficient vector $\theta^t \in \theta^t_{N_s}$ is then used to reconstruct the joint trajectory following~\eqref{eq:joint_reconstruction} to evaluate the relevant cost functions $\mathcal{R}(\cdot)$ in parallel. Next, samples drawn from~\eqref{eqn:batch_sampling_mbd} are weighted according to the target probability distribution, through the combination of the Boltzmann-like energy distribution \(P_{\mathcal{R}}(\theta)\propto \exp(-\mathcal{R}/\lambda)\). We obtain the following weighted average of the $N_s$ samples at reverse step $t$ as
% \blue{I suggest we remove $g$ since we won't use it.}
\begin{equation}
    \bar{\theta}^{\,t}
    =
    \frac{
        \sum_{\theta \in \theta^t_{N_s}} \theta\, P_{\mathcal{R}}(\theta)\,
    }{
        \sum_{\theta \in \theta^t_{N_s}} P_{\mathcal{R}}(\theta)\
    }.
    \label{eqn:weighted_sample_mean_mbd}
\end{equation}
Using this weighted mean, the score function is finally given as
\begin{equation}
    \nabla_{\theta^t}\log P_t(\theta^t)
    \approx
    -\frac{\theta^t - \sqrt{\bar{\alpha}_t}\,\bar{\theta}^{\,t}}{1-\bar{\alpha}_t}.
    \label{eqn:score_function_mbd}
\end{equation}
Intuitively, the reverse diffusion process gradually aggregates samples around a manifold of \textit{high probability} defined by the underlying target distribution to be minimized, $P_{\mathcal{R}}(\cdot)$. During this process, the current iterate of the denoised polynomial coefficient vector $\theta^t$ is gradually shifted until the target distribution $P_{\mathcal{R}}(\cdot)$ is minimized. At the end of the iterative reverse diffusion process, we obtain the final denoised vector of polynomial coefficients $\theta^{*}$. We refer readers to \cite{pan2024modelbaseddiffusion} for the complete derivation of the score function as well as the proof of convergence guarantee.
% At the end of the iterative reverse diffusion process, the final denoised control sequence \(u^0\) is applied to some generic system dynamics $\mathcal{S}(\cdot)$
% \begin{equation}
%     x_{k+1}=\mathcal{S}(x_k,u_k), \qquad k =0,\dots, T,
% \end{equation}
% which yields a dynamically feasible state trajectory \(x_{0:T}\). 
% Overall, MBD samples from the trajectory-optimality distribution using only the control sequence as the optimization variable, while feasibility is enforced through rollout rather than by diffusing directly in state space.

\section{Model-Based Diffusion for the Outer Subproblem}
\label{sec:methodology}
\subsection{Parameterization of diffusion latent state}
We apply reverse diffusion in the space of polynomial coefficients $\theta \in \mathbb{R}^{nd}$, which are then mapped to joint trajectories using a set of polynomial basis functions. Recall that the joint trajectory is parameterized as $q_{ij} = p(s_i)\theta_{j}$, for $i=0,\dots,N$ and $j = 1, \dots, n$, and each $\theta_j \in \mathbb{R}^d$. Without loss of generality, we assume a set of monomial basis functions with degree $d-1$:
\begin{equation}
    p_k(s) = s_i^{d-k}, \qquad k=0,\dots,d-1.
\end{equation}
 % \blue{Maybe we can define this basis early on and just say that others could be used. }
%Randy: It looks like there is not a better place to introduce the monomial basis since it's a very specific one. I thought about introducing it in Problem Formulation, but that might be too ealry. 
% \blue{Following up the discussion. We can do it above. I don't see a problem. Reviewers have complained in the past that we are not explicit about the basis we use.}

The reverse diffusion process iteratively generates the refined polynomial coefficients $\theta^t$ vector until we obtain $\theta^*$, the solution to \eqref{opt:outer} after $\tilde{N}$ reverse diffusion steps. For numerical stability, we first introduce an auxiliary diffusion variable
\begin{equation}
    y \in [-1,1]^{nd}.
\end{equation}
By denoting $\circ$ as the element-wise product, we then relate $y$ to the polynomial coefficients through the following affine mapping
\begin{equation}
    \theta(y) = \theta^0 + \sigma \circ y,
    \label{eqn:span_mapping}
\end{equation}
where $\theta^0$ denotes some nominal polynomial coefficient vector that we initialize, $\sigma \in \mathbb{R}_+$ controls the exploration range in the space of polynomial coefficients. 
% \blue{Using a word for a variable is not very standard. Let's use a different notation. Greek letter $\sigma$?}
% The nominal coefficients $\theta^0$ can be obtained from a two-step procedure: first, we run inverse kinematics using the desired Cartesian path to obtain a joint trajectory; then, we fit a polynomial of degree $K$ to obtain the corresponding polynomial coefficients. 

Let $\bar{q}_{j} > 0$ denote the magnitude of the joint limit for joint $j$. We choose the element-wise $\sigma$ at joint $j$ using the following expression (see details in Proposition \ref{remark:1})
\begin{equation}
|\theta^0_{j,k}| + \sigma_{j,k} \leq \frac{\bar{q}_{j}}{d},
\label{eq:span_mapping_limit}
\end{equation}
for $k=0,\dots,d-1$, since a polynomial of degree $d-1$ has $d$ distinct terms. This choice of $\sigma_{j,k}$ scales the coefficient explorations in~\eqref{eqn:span_mapping} relative to the given joint limits $\bar{q}_{j}$. Intuitively, by limiting the exploration in the space of polynomial coefficients, we can also limit the joint angle at all path index $i$ along the resulting joint trajectory $q_{ij}$. This choice results in the joint position constraint of problem \eqref{opt:outer} being satisfied. This is the subject of the next proposition.
% \blue{This is part of the proof.}\red{Substituting an joint-wise notation of~\eqref{eqn:span_mapping} into the joint trajectory parameterization $q_{ij} = p(s_i)\theta_j$, we obtain the following element-wise expression for joint angle $j$ at path index $i$:
% \begin{equation}
%     q_{ij}
%     =
%     \sum_{k=0}^{d-1}
%     \Bigl(\theta^0_{j,k} + \sigma_{j,k}\, y_{j,k}\Bigr)^Tp_k(s_i).
% \end{equation}} 

% \blue{Make the remark a proposition. Don't write Equation number, just use \eqref{}, parenthesis and a number already means equation.}
% \blue{We can make relate this proposition to the problem we are solving more direct. }
\begin{proposition}\label{remark:1}
With the condition in~\eqref{eq:span_mapping_limit}, the resulting joint trajectory $q_{ij}$ is always upper bounded by the given joint limit $\bar{q}_{j}$. 
\end{proposition}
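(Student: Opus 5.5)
The plan is to bound $q_{ij}$ termwise via the triangle inequality. Two facts do the work: every monomial basis function is bounded by one on the path interval, and the box constraint $y \in [-1,1]^{nd}$ together with \eqref{eq:span_mapping_limit} bounds each coefficient magnitude.

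First, substitute the affine mapping \eqref{eqn:span_mapping} into the parameterization \eqref{eq:joint_reconstruction}. Componentwise this gives
\begin{equation*}
q_{ij} = \sum_{k=0}^{d-1} \bigl(\theta^0_{j,k} + \sigma_{j,k}\, y_{j,k}\bigr)\, p_k(s_i).
\end{equation*}

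Second, bound each factor separately.

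\begin{itemize}
\item Since the path parameter satisfies $s_i \in [0,1]$ (with $s_0=0$, $s_N=1$), every monomial $p_k(s_i) = s_i^{d-k}$ satisfies $0 \le p_k(s_i) \le 1$. This holds for any exponent convention, including the degree-zero term, which equals one.
\item Since $|y_{j,k}| \le 1$ and $\sigma_{j,k} \ge 0$, we get $|\theta^0_{j,k} + \sigma_{j,k} y_{j,k}| \le |\theta^0_{j,k}| + \sigma_{j,k}$.
\end{itemize}

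Third, combine these with the triangle inequality:
\begin{equation*}
|q_{ij}| \le \sum_{k=0}^{d-1} \bigl(|\theta^0_{j,k}| + \sigma_{j,k}\bigr)\, |p_k(s_i)| \le \sum_{k=0}^{d-1} \frac{\bar{q}_j}{d} = \bar{q}_j .
\end{equation*}
This holds for every $i=0,\dots,N$ and every $j=1,\dots,n$, and for every $y$ in the box. The bound therefore covers every sample and every iterate the diffusion produces, provided $y$ is kept in $[-1,1]^{nd}$, for instance by clipping after each reverse step.

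The last step is to connect this to the position constraint of \eqref{opt:outer}. If $\bar{q}_j$ is read as $\min\{|\underline{q}_j|, |\overline{q}_j|\}$, then $|q_{ij}| \le \bar{q}_j$ implies $\underline{q}_j \le p_i\theta_j \le \overline{q}_j$. This needs $\underline{q}_j \le 0 \le \overline{q}_j$, which is the natural assumption for a symmetric-magnitude limit.

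The argument is a routine chain of inequalities. The only delicate point is bookkeeping. The bound $|p_k(s_i)| \le 1$ relies on the path being normalized to $[0,1]$, and the proposition implicitly requires $y$ to remain in the unit box. I would state both explicitly as the hypotheses under which the claim holds, because the Gaussian samples in \eqref{eqn:batch_sampling_mbd} are not automatically bounded.
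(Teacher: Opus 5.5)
Your proof is correct and is essentially the paper's argument: substitute $\theta_{j,k}=\theta^0_{j,k}+\sigma_{j,k}y_{j,k}$ into the monomial parameterization, use $|s_i^{d-k}|\le 1$ and $|y_{j,k}|\le 1$, and apply the triangle inequality with \eqref{eq:span_mapping_limit} to get $|q_{ij}|\le d\cdot\bar q_j/d=\bar q_j$. Stating the box condition on $y$ and the reading of $\bar q_j$ explicitly is a fair sharpening of what the paper leaves implicit; note that in Algorithm~\ref{alg:proj-diff} the box condition already holds for every evaluated sample and every iterate after initialization, because the samples are clipped and the reverse step simplifies to $y^{t-1}=\sqrt{\bar\alpha_{t-1}}\,\bar y^{t}$, a scaled convex combination of clipped samples.
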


\begin{proof} Since each joint is parameterized independently, we prove the result for a single joint $j\in\left\{0,1,\ldots,n\right\}$. We omit the index $j$ in the remainder of the proof to avoid overloading the notation. Recalling the monomial parametrization in \eqref{eq:joint_reconstruction}, write the position of a joint at path point $s_i$ as
\begin{equation}
    q_i = \sum_{k=0}^{d-1} \theta_k^T s_i^{d-k} 
    \label{eqn:joint_traj}
\end{equation}
where the coefficients are parameterized as
\begin{equation}
    \theta_k = \theta^0_{k} + \sigma_k\, y_k,
    \qquad y_k \in [-1,1].
    \label{eqn:coeff_params}
\end{equation}
Substituting~\eqref{eqn:coeff_params} into~\eqref{eqn:joint_traj}, we obtain
\begin{equation}\label{eqn_aux_proposition1}
    q_i
    =
    \sum_{k=0}^{d-1}
    \left(\theta^0_{k} + \sigma_k y_k\right)^Ts_i^{d-k}.
\end{equation}
Note that for all $s_i\in[0,1]$, $\left|s_i^{d-k}\right|\leq 1$ and that $|y_k| \le 1$ during the reverse diffusion updates. Leveraging these facts, and applying the triangle inequality to \eqref{eqn_aux_proposition1} yields
\begin{align}
    |q_i|
    &=
    \left|
    \sum_{k=0}^{d-1}
    \left(\theta^0_{k} + \sigma_k y_k\right)s_i^{d-k}
    \right| \\
    %&\le
    %\sum_{k=0}^{d-1}
    %\left|\theta^0_{k} + \sigma_k y_k\right|
    %|s_i^{d-k}| \\
    &\le
    \sum_{k=0}^{d-1}
    |\theta^0_{k}| + \sigma_k \leq \frac{\bar{q}_{j}}{d}\cdot d = \bar{q}_{j},
    % &\le
    % \sum_{k=0}^{K}
    % \left(|\theta^0_{k}| + \mathrm{span}_k\right),
\end{align}
where the last inequality follows from \eqref{eq:span_mapping_limit}. This completes the proof the result. \end{proof}

% As a result, during the reverse diffusion update, we may safely omit the joint limit constraint $P_g(\cdot)$ in~\eqref{eqn:batch_sampling_mbd}.
% \blue{We need to explain how this results gets us rid of one of the constraints in the problem. I also see how defining the coefficients in this way is harming us because we can never explore all the space. In particular at $s=0$. We need to comment on this. }
% \red{Randy: I think I need to keep the constraint in Equation (14), otherwise I'll need to bring it up again here when discussing why our Proposition removes the constraint afterall.}
It is worth noting that according to Proposition \ref{remark:1}, the exploration size in the space of polynomial coefficients is limited. This choice is intentional due to the fact that the space of polynomial coefficients spans all real numbers. Empirically, we validate that this choice of $\sigma$ leads to superior performance in the following sense: one may choose a larger exploration parameter $\sigma$ by solving an optimization problem to find the exact largest exploration bound in the space of $\theta$ without violating joint limit constraints, but might need to sacrifice performance in the runtime. Qualitatively, a larger exploration parameter $\sigma$ means that it takes longer for diffusion to converge to a high-quality solution.
% \red{leads to superior performance (see Sec \ref{sec:results}).} \blue{as compared to other values? Are we presenting those results?}
% \textcolor{purple}{actually, we only present the results for this choice of $\sigma$. What I mean to say here is that our results are good, even though exploration is limited.}

\subsection{Reverse Diffusion Loop for the High-Level Subproblem}
We adapt the model-based diffusion algorithm \cite{pan2024modelbaseddiffusion} to the high-level problem defined in~\eqref{opt:outer}. We summarize our model-based diffusion algorithm for the high-level problem as shown in Algorithm \ref{alg:proj-diff}. 
% \blue{Explain the connection between the Background section and the algorithm and then explain the differences.} 
While the original MBD algorithm \cite{pan2024modelbaseddiffusion} applies to trajectory optimization of dynamical systems that evolve over time, our diffusion-based algorithm applies to static kinematic trajectory optimization. For our particular problem, instead of applying a static cost function as proposed in \cite{pan2024modelbaseddiffusion} or the quadratic augmented Lagrangian cost function proposed in \cite{kurtz2024equalityconstraineddiffusion}, we propose a novel adaptive cost function for the high-level problem at each reverse diffusion update step $t$ (Line 12, Algorithm \ref{alg:proj-diff}).
% \blue{There was that Lagrangian diffusion method. This is also similar, right? We should make that connection.} Yes, I am including that one as well but it's not using an adaptive reward. It is a fixed augmented Lagrangian function with a quadratic term.
First, we map $y$ to $\theta$ according to~\eqref{eqn:span_mapping}. Then, at the reverse update step $t$, the cost function is given as
% \blue{I added a superscript t since it depends on lambda. Now, the functions V and E, don't change with time. They only change with theta, so making the explicit dependence with the time seems incorrect to me. I suggest we remove all time dependencies. We can also make R depend on Lambda instead of t. That would be more appropriate. }
\begin{equation}
    \mathcal{R}\left(\theta\left(y^{t}\right), \tilde{\lambda}^t\right) = V\left(\theta\left(y^{t}\right)\right) + \tilde{\lambda}^{t}\left( E\left(\theta\left(y^{t}\right)\right)-\epsilon \right),
    \label{eqn:adaptive_reward}
\end{equation}
where $\tilde{\lambda}^t$ is an adaptive weight, and $V(\cdot)$ is obtained by evaluating~\eqref{eq_solution_inner_cte}, while $E(\cdot)$ is obtained by evaluating~\eqref{eq:error}.
% \blue{This is another good moment to explain the term model-based in the context of this work.}
As discussed in Section \ref{sec:background}, our diffusion algorithm is model-based since we first draw samples around the current iterate $\theta^t$ and then weigh the samples by evaluating the \emph{known objective distribution} $P_{\mathcal{R}}(\cdot)$ over these samples.
The adaptive weight $\tilde{\lambda}$ at the next reverse diffusion step is updated as
\begin{equation}
    \tilde{\lambda}^{t-1} = \max \left(0,  \lambda^t -\gamma \left(E\left(\theta\left(y^{t-1}\right)\right)-\epsilon \right)\right),
    \label{eqn:lambda_adaptive}
\end{equation}
% \blue{I think we are missing $\lambda^t$.}
where $\gamma\in(0,1)$ is a fixed step size across all $t$. This resembles the primal-dual optimization method \cite{boyd2004convex}, but in our approach, there is no gradient information. We find that this choice of adaptive weighting provides a stable balance between the penalty on the Cartesian error $E(\cdot)$ and the final time $V(\cdot)$. 
% In addition, since the resulting joint trajectory never exceeds the limit as shown in Proposition \ref{remark:1}, the weighted sample mean in~\eqref{eqn:weighted_sample_mean_mbd} becomes a softmax distribution of $\mathcal{R}(\cdot)$ only, as we can safely omit $P_g(\cdot)$. 
% If there are additional safety constraints that need to be imposed, then we may retain $P_g(\cdot)$ and modify the computation of the weighted mean accordingly.
% \blue{If we remove $g$ from the background description or we comment about it as an additional possibility we can remove this comment.}

\begin{algorithm}[htbp]
\caption{Model-Based Diffusion for Dual-Arm Minimum-Time Optimization}
\label{alg:proj-diff}

\DontPrintSemicolon
\SetAlgoNlRelativeSize{-1}
\SetNlSkip{0.4em}
\IncMargin{0.8em}
\SetInd{0.3em}{0.8em}
\raggedright

\KwIn{
Noise schedule parameters $\{\alpha_t,\bar{\alpha}_t\}_{t=1}^{\tilde N}$,\;
number of samples per reverse step $N_s$,\;
temperature $0s< \tau<1$,\;
desired relative Cartesian path $(\chi^d_i)_{\mathcal{F}}$,\;
high-level problem cost $J(y)$,
}
\KwOut{Optimized diffusion state $y^\star$}

Initialize nominal $\theta_0$ via inverse kinematics and polynomial fit\;
Set $t \gets \tilde N$\;
Initialize diffusion state $y^{t} \sim \mathcal{N}(0,I)$\;

\While{$t \ge 1$}{
\parbox[t]{0.86\linewidth}{%
Sample a batch of size $N_s$ in parallel:\\[1.0ex]
\hspace*{1.2em}$y_{N_s}^{t} \sim
\mathcal{N}\!\left(
\dfrac{y^{t}}{\sqrt{\bar{\alpha}_{t-1}}},
\left(\dfrac{1}{\bar{\alpha}_{t-1}}-1\right)I
\right)$%
}\;

\parbox[t]{0.86\linewidth}{%
Clip values for numerical stability:\\[1.0ex]
\hspace*{1.2em}$y_{N_s}^{t} \gets \mathrm{clip}(y_{N_s}^{t},-1,1)$%
}\;

\parbox[t]{0.86\linewidth}{%
Evaluate the objective for all samples in parallel~\eqref{eqn:adaptive_reward}:\\[1.0ex]
\hspace*{1.2em}$c_{N_s} \gets -\mathcal{R}\!\bigl(\theta(y_{N_s}^{t}),\tilde{\lambda}^t\bigr)$%
}\;

\parbox[t]{0.86\linewidth}{%
Normalize and compute softmax weights:\\[1.0ex]
\hspace*{1.2em}$\hat c_{N_s} \gets
\dfrac{c_{N_s}-\mathrm{mean}(c_{N_s})}
{\mathrm{std}(c_{N_s})\,\tau},
\qquad
w_{N_s} \gets
\dfrac{e^{\hat c_{N_s}}}
{\sum_{\ell=1}^{N_s} e^{\hat c_\ell}}$%
}\;

\parbox[t]{0.86\linewidth}{%
Compute weighted mean:\\[1.0ex]
\hspace*{1.2em}$\bar y^{t} \gets \sum_{\ell=1}^{N_s} w_\ell\, y_\ell^{t}$%
}\;

\parbox[t]{0.86\linewidth}{%
Estimate the score function:\\[1.0ex]
\hspace*{1.2em}$s^{t} \gets
\dfrac{-y^{t}+\sqrt{\bar{\alpha}_t}\,\bar y^{t}}
{1-\bar{\alpha}_t}$%
}\;

\parbox[t]{0.86\linewidth}{%
Reverse update step:\\[1.0ex]
\hspace*{1.2em}$y^{t-1} \gets \dfrac{1}{\sqrt{\alpha_t}}
\left(y^{t}+\left(1-\bar{\alpha}_t\right)s^{t}\right)$%
}\;

Apply adaptive penalty weight $\tilde{\lambda}^{t-1}$~\eqref{eqn:lambda_adaptive}
% $t \gets t-1$\;
% }
}
\DecMargin{0.8em}
\end{algorithm}

% \vspace{-1em}
\section{Results}
\label{sec:results}
In this section, we present numerical experiments to validate the superior optimization performance of our Algorithm \ref{alg:proj-diff} in Section \ref{sec:methodology}. First, we apply Algorithm \ref{alg:proj-diff} as a local optimization solver to compare against the results reported by the bi-level optimization approach proposed in \cite{bilevel_jonathan}, using the same numerical setup. Comparisons with general off-the-shelf nonlinear solvers (CasAdi, ipopt) have been a subject of our previous paper, where the superior performance of the bi-level solution \cite{bilevel_jonathan} has been demonstrated. As such, these are omitted in this section for brevity.

The simulation models a cold spraying task using two planar 3R manipulators, where one of the robots holds the spray, while the other holds the workpiece. The curve $\chi^d$ to be followed represents the leading edge of a 2D fan blade mock-up, discretized into $N+1 = 500$ uniformly spaced points. A Cartesian error tolerance of $\pm5$ mm is set per industry standards.

The length of each manipulator's links are $a_1 = 2m$, $a_2 = 1.5m$, $a_3 = 1m$ and their joint limits are given by $\dot{\overline{q}} = -\dot{\underline{q}} =[1.75\;1.57\;1]\;\;rad/s$ and $\ddot{\overline{q}} = -\ddot{\underline{q}} = [35\;31.4\;20]\;\;rad/s^2$.
These values are based on the FANUC M-1000iA industrial robot to handle the reach and size of the fanblade\footnote{The typical industrial robot of this size would usually have at least 6 degrees of freedom. We have reduced it to a 3R manipulator for the purposes of this numerical demonstration.}. The simulation setup can be seen in Figure \ref{fig:transforms}.

The forward kinematics mapping for each 3R planar robot manipulator is given as follows

    \begin{equation}\label{eqn_forward_kinematics_experiment}
        \left[\begin{array}{c}
             x  \\
             y  \\
             \phi
        \end{array}\right] = \left[\begin{array}{c}
             a_1c_1 + a_2c_{12}+a_3c_{123}  \\
             a_1s_{1} + a_2s_{12}+a_3s_{123}  \\
             q_{1_\ell} + q_{2_\ell} + q_{3_\ell}
        \end{array}\right],
    \end{equation}
    where $c_{ijk} = \cos(q_{i_\ell} + q_{j_\ell} + q_{k_\ell})$, $s_{ijk} = \sin(q_{i_\ell} + q_{j_\ell} + q_{k_\ell})$ and $\ell \in \{A,B\}$ indicates the individual robot arm.

We have a total of $n = 6$ joints and our desired Cartesian trajectory is given by $(x,y) \in \mathbb{R}^2$. We calculate initial conditions by computing analytical inverse kinematics, after solving the redundancy with a heuristic. In this case, we consider a baseline single-arm scenario. The manipulator holding the blade initially holds the piece at the center of the path $s$ with $\phi = \pi$, constant, which fixes its three joints. This leaves one degree of freedom, determined by $\phi$ of the spray-holding arm. For those, we sample values uniformly at random in $[0, 2\pi]$ and held constant along the trajectory.

% \textcolor{purple}{in python?}.

% \subsection{Comparison against bi-level optimization solver}
% \label{sec:resultsI}

We initialize the polynomial coefficients by first solving the analytic inverse kinematics from 8 randomly chosen orientation angles $\phi$, and then parameterizing the resulting joint trajectory $q_{ij}$ with $\theta^0_{j}$ using polynomial fitting, similar to \cite{bilevel_jonathan}. We adopt a 9th-order polynomial, consistent with \cite{bilevel_jonathan}. While the bi-level optimization approach solves the high-level problem (\ref{opt:outer}) using a gradient-based primal-dual approach, our solver applies model-based diffusion. We run $\tilde{N}$ = 200 steps. We show the numerical results from our diffusion solver in Table \ref{tab:diffusion_solver}, and show the numerical results from the bi-level optimization solver \cite{bilevel_jonathan} in Table \ref{tab:bi_level_solver}. We run all our simulations in Python on a PC with an Intel 8th Gen CPU and 16GB of RAM, without any GPU accelerations.

% \textcolor{red}{Results have been updated;tuned the parameters of the adaptive weight on the cartesian error, add 3 more trials where the initial inverse kinematics is reasible}
\begin{table}[t]
\caption{Results from our diffusion solver for different $\phi$}
\label{tab:diffusion_solver}
\centering
\footnotesize
\setlength{\tabcolsep}{5pt}
\renewcommand{\arraystretch}{1.15}
\begin{tabular}{c c c c}
\hline
$\phi$ (rad) & Optimized $t_f$ (s) & Max Cart. Err. (mm) & Runtime (s) \\
\hline
5.12 & 0.73 & 2.83 & 13.77 \\
0.00 & 1.17 & 4.39 & 10.47 \\
1.04 & 0.79 & 2.94 & 11.62 \\
0.56 & 1.01 & 2.19 & 11.08 \\
4.76 & 0.90 & 3.87 & 11.66 \\
1.22 & 0.70 & 3.54 & 12.01 \\
4.98 & 0.78 & 2.94 & 11.01 \\
6.01 & 1.06 & 4.06 & 10.83 \\
\hline
Avg & \textbf{0.89} & \textbf{3.35} & \textbf{11.56} \\
Std & \textbf{0.16} & \textbf{0.69} & \textbf{0.96} \\
\hline
\end{tabular}
\end{table}

\begin{table}[t]
\caption{Results from the bi-level optimization solver for different orientation angle $\phi$}
\label{tab:bi_level_solver}
\centering
\footnotesize
\setlength{\tabcolsep}{5pt}
\renewcommand{\arraystretch}{1.15}
\begin{tabular}{c c c c}
\hline
$\phi$ (rad) & Optimized $t_f$ (s) & Max Cart. Err. (mm) & Runtime (s) \\
\hline
5.12 & 0.77 & 11.01 & 1232.91 \\
0.00 & 0.99 & 5.36 & 625.28 \\
1.04 & 0.80 & 2.52 & 204.39 \\
0.56 & 0.95 & 4.42 & 198.74 \\
4.76 & 0.91 & 2.04 & 217.72 \\
1.22 & 0.73 & 2.42 & 196.65 \\
4.98 & 0.82 & 4.71 & 474.38 \\
6.01 & 0.72 & 8.24 & 156.98 \\
\hline
Avg & \textbf{0.84} & \textbf{5.09} & \textbf{413.38} \\
Std & \textbf{0.09} & \textbf{2.93} & \textbf{346.59} \\
\hline
\end{tabular}
\end{table}
The Maximum Cartesian Error is the point-wise maximum Cartesian error from the optimized trajectory. In practice, due to the gradient-free nature of our diffusion solver we can directly optimize for the Maximum Cartesian Error~\eqref{eqn:lambda_adaptive}, whereas the bi-level optimization solver \cite{bilevel_jonathan} or a well-known Interior Point solver \cite{casadi} struggles with the Maximum Cartesian Error metric due to its gradient-based nature -- evaluating the gradient of infinity norm associated with the Maximum Cartesian Error yields a very sparse vector. Thus, in the bi-level optimization solver, a 2-norm weighted error is used as a surrogate. 
% \blue{There is no 2 norm error in 32. I think we should explain the opposite, that Jonathan's approach (or any gradient based method for all that matter will have an issue with the infinite norm and that for that reason they use weighted penalties which can result in hard to control errors. 
% We have done the test of solving a nondifferentiable minimization problem with IPOPT and it was bad too.} 

Based on these results, we observe that our diffusion solver is able to achieve approximately \textbf{34\% lower} Maximum Cartesian Error on average, with over \textbf{35x reduction} in runtime. In addition, our diffusion solver leads to a small standard deviation in the runtime due to its gradient-free nature ($\sim$ 8\% of the average), whereas the bi-level optimization solver leads to a standard deviation of over 80\% of the average runtime. A tradeoff we observe is that the optimized $t_f$ from our diffusion solver (Table \ref{tab:diffusion_solver}) is, on average, approximately 5\% larger than that of the bi-level optimization solver (Table \ref{tab:bi_level_solver}). However, in several trials, the optimized $t_f$ from our diffusion solver outperforms the bi-level optimization solver.

\section{Conclusion}\label{sec:conclusion}
% \blue{Don't forget to write the conclusions.}
In summary, to tackle the remaining challenges in our prior work \cite{bilevel_jonathan}, we have proposed a novel diffusion-based framework that does not depend on gradient information. As a result, our method achieves a 35x reduction in the runtime and 34\% less Cartesian error. 

In a future study, we would like to further investigate how the size of exploration in the space of polynomial coefficients affects the final outcome of our diffusion algorithm when applied to various robotic arm motion planning problems.
\bibliographystyle{ieeetr}
\bibliography{references}

\end{document}